\documentclass{article}

\usepackage{microtype}
\usepackage{graphicx}
\usepackage{subcaption}
\usepackage{booktabs} 

\usepackage{multirow}
\usepackage{amsmath}
\usepackage{booktabs}
\usepackage{graphicx}
\usepackage{svg}

\usepackage{hyperref}
\usepackage{adjustbox}

\usepackage[accepted]{icml2026}

\usepackage{amsmath}
\usepackage{amssymb}
\usepackage{mathtools}
\usepackage{amsthm}
\usepackage{amsmath}
\usepackage{amssymb}
\usepackage{graphicx}
\usepackage{tcolorbox}
\usepackage{float}
\usepackage{tabularx}
\usepackage[table]{xcolor}
\usepackage[capitalize,noabbrev]{cleveref}                  
\usepackage[colorinlistoftodos,textsize=tiny]{todonotes}
\usepackage{booktabs}
\usepackage{pifont}
\usepackage{colortbl}
\usepackage{multirow}
\usepackage{amsmath}
\usepackage{tcolorbox}
\newtcolorbox{graybox}{
  colback=gray!8,
  colframe=gray!8,
  sharp corners=all,
  boxrule=0mm,
  boxsep=0.5mm,
  left=2mm,
  right=2mm,
  top=1.5mm,
  bottom=1.5mm
}
\usepackage{booktabs}
\usepackage[table]{xcolor}  
\usepackage{multirow}
\usepackage{bm}

\definecolor{darkgreen}{rgb}{0.0, 0.5, 0.0}
\definecolor{lightpurple}{rgb}{0.862, 0.918, 0.992}
\definecolor{posred}{RGB}{192, 32, 38}
\definecolor{neggreen}{RGB}{20, 130, 50}
\newcommand{\dup}[1]{\textsubscript{\textbf{\textcolor{posred}{+#1}}}}
\newcommand{\ddown}[1]{\textsubscript{\textbf{\textcolor{neggreen}{$-$#1}}}}

\theoremstyle{plain}
\newtheorem{theorem}{Theorem}[section]

\theoremstyle{definition}
\newtheorem{definition}[theorem]{Definition}
\newtheorem{assumption}[theorem]{Assumption}
\theoremstyle{remark}

\usepackage{tabularx}
\definecolor{lightpurple}{rgb}{0.862, 0.918, 0.992}

\icmltitlerunning{Towards Efficient LLMs Annealing with Principled Sample Selection}

\begin{document}

\twocolumn[
  \icmltitle{Towards Efficient LLMs Annealing with Principled Sample Selection}




 \icmlsetsymbol{corr}{*}
  \begin{icmlauthorlist}
    \icmlauthor{Yuanjian Xu}{hkust,msra}
    \icmlauthor{Jianing Hao}{hkust}
    \icmlauthor{Wanbo Zhang}{fudan}
    \icmlauthor{Zhong Li}{msra,corr}
    \icmlauthor{Guang Zhang}{hkust,corr}
  \end{icmlauthorlist}

  \icmlaffiliation{hkust}{The Hong Kong University of Science and Technology (Guangzhou), Guangzhou, China}
  \icmlaffiliation{fudan}{Fudan University, Shanghai, China}
  \icmlaffiliation{msra}{Microsoft Research Asia, Beijing, China}

  \icmlcorrespondingauthor{Zhong Li}{zhongli@microsoft.com}
  \icmlcorrespondingauthor{Guang Zhang}{guangzhang@hkust-gz.edu.cn}

  \icmlkeywords{Machine Learning, ICML}

  \vskip 0.3in
]



\printAffiliationsAndNotice{Work done during Yuanjian Xu's internship at Microsoft Research Asia.}

\begin{abstract}
The annealing phase is a pivotal convergence stage in LLM pre-training that ultimately determines final model quality. However, effectively selecting training data during this phase remains a key challenge. Current strategies rely on empirical heuristics, such as domain filtering or context extension, which lack a principled grounding in optimization theory. In this work, we characterize the annealing phase through the lens of the loss landscape's spectral geometry. We argue that optimal convergence requires gradient updates to satisfy heterogeneous constraints across different eigen-directions. Building on this insight, we formulate data selection as a problem of satisfying these directional constraints. To this end,  we propose \textbf{DiReCT} (\textbf{Di}rectionally-\textbf{Re}strained \textbf{C}onstrained \textbf{T}raining), a novel framework that reformulates sample selection in the annealing stage as a constrained optimization problem. By imposing explicit directional constraints on per-sample gradients based on the spectral properties of the Hessian, \textbf{DiReCT} identifies samples that align with the optimal curvature-aware descent path. Extensive experiments across various model scales demonstrate that \textbf{DiReCT} consistently achieves state-of-the-art performance. For future research, code is available at \url{https://github.com/xuyj233/Direct}.
\end{abstract}

\section{Introduction}
\label{sec:intro}
While the capabilities of large language models (LLMs) are fundamentally determined by their training data~\cite{xu2023hard,luo2025survey}, effectively harnessing this data requires sophisticated training strategies. Modern pre-training pipelines therefore commonly adopt a two-stage approach ~\cite{Hu2024Yulan,black2022gpt,llama3}: a stable phase for acquiring foundational knowledge from massive datasets, followed by an annealing phase for final convergence and optimization. Crucially, this annealing phase relies on two key levers: learning rate decay and targeted data selection. However, while the former is well-studied, strategies for the latter remain largely empirical. This reliance on heuristics, absent a principled grounding in optimization theory, leads to sub-optimal data scheduling and inefficient resource allocation at this most critical juncture of training.

In the absence of a principled foundation, data selection in the annealing phase consequently revolves around two empirical heuristics. One focuses on capability refinement through the up-sampling of reasoning-heavy data (e.g., mathematics, code)~\cite{llama3}. The other prioritizes context extension by integrating long-sequence samples~\cite{qwen3}. In essence, both strategies induce targeted shifts in the data distribution. Although they yield gains, these heuristics remain disconnected from the underlying optimization dynamics, leading to suboptimal scheduling. A more principled approach would require a selection framework explicitly guided by the model's instantaneous optimization geometry. 
To design such a data selection strategy, we must first characterize how the annealing phase differs from stable training. Following the valley metaphor used in prior studies to characterize optimization~\cite{DBLP:conf/iclr/Wen0WHL025}, as shown in Figure~\ref{fig:comp}, towards the end of stable training (Figure~\ref{fig:comp}(a)), the optimization enters a regime where gradients oscillate within steep regions of the loss landscape with minimal effective descent, causing convergence to stagnate. Breaking this plateau therefore requires the annealing phase (Figure~\ref{fig:comp}(b)) to enforce a dual objective: providing a strong vertical descent signal while constraining horizontal oscillations within a sharp subspace.

Driven by this geometric intuition, we introduce \textbf{DiReCT}, a framework that formalizes data selection through the lens of spectral geometry. Our analysis begins by characterizing the annealing-phase loss landscape via the Hessian matrix computed on a validation set. This reveals a fundamental tension between high-curvature (``sharp'') directions, where optimization is prone to instability from gradient noise, and low-curvature (“flat”) directions, which govern stable generalization. \textbf{DiReCT} resolves this tension by evaluating each training sample based on its gradient’s projection onto these spectral components, actively selecting those that amplify the descent signal in flat valleys while suppressing noise in sharp directions. To render this principle scalable, we employ efficient randomized projections to approximate high-dimensional gradients, dramatically reducing computational overhead. Moreover, the selection criterion permits the entire data schedule to be precomputed with just a single forward pass over the dataset before training begins. Consequently, \textbf{DiReCT} seamlessly aligns sample selection with the underlying optimization dynamics, ensuring both efficient computation and stable convergence.
\begin{figure*}[t]
    \centering
    \includegraphics[width=0.85\linewidth]{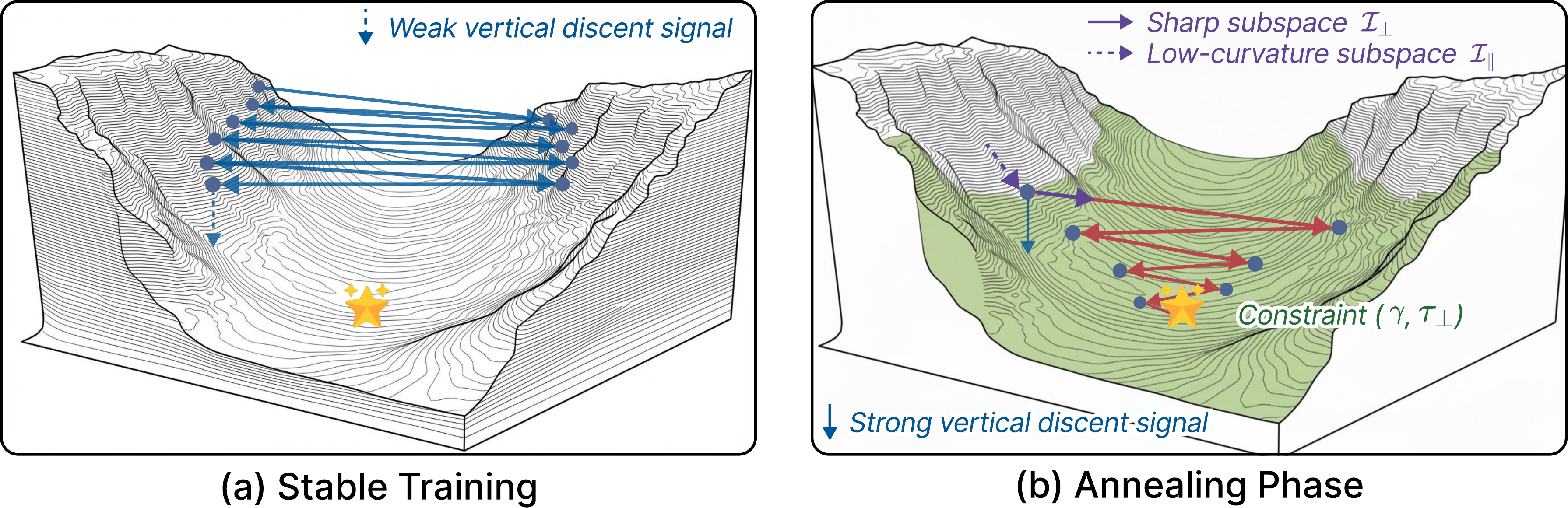}
    \caption{A geometric comparison of optimization dynamics across training phases. In the stable training stage (a), high learning rates and uncurated data lead to high-variance oscillations across the steep walls of the loss landscape, resulting in a weak descent signal. During the annealing phase (b), our proposed mechanism suppresses transverse noise while maximizing the longitudinal signal, allowing the model to descend efficiently along the low-curvature valley toward the optimum.}
    \label{fig:comp}
\end{figure*}
Our contributions are summarized as follows:
\begin{itemize}
    \item We propose \textbf{DiReCT}, to the best of our knowledge, the first systematic framework designed to guide data selection during the annealing phase of LLMs.
    \item \textbf{DiReCT} moves beyond empirical heuristics by leveraging the spectral properties of the Hessian matrix to guide the annealing phase. By identifying and addressing the tension between sharp directions and low-curvature valleys, \textbf{DiReCT} offers a robust mechanism to suppress optimization noise while maintaining a strong descent signal.
    \item We demonstrate the effectiveness of \textbf{DiReCT} through extensive experiments across diverse model architectures and scales. The results show that our approach consistently outperforms state-of-the-art empirical baselines in final model capabilities.
\end{itemize}

\section{Related Work}
\label{sec:related_work}
We organize this section by reviewing data selection strategies across the  two major stages of LLM pre-training: stable training and the annealing phase.

\paragraph{Data strategies in stable training.} The initial phase of pre-training focuses primarily on the acquisition of fundamental linguistic structures. 
Although existing literature rarely distinguishes the stable training stage explicitly, most current data strategies are inherently designed for this period, focusing on balancing domain weights and prioritizing samples to ensure steady convergence. 
Current research has transitioned from manual heuristics to optimization-driven domain weighting. For instance, DoReMi \citep{Xie23DoReMi} utilizes distributionally robust optimization to minimize excess loss, while REGMIX \citep{liu2025RegMix} and MixMin \citep{thudi2025mixmin} treat mixture identification as regression or convex optimization problems. 
Furthermore, recent works introduce adaptive scheduling to handle evolving data importance during this phase. Chameleon \citep{xie2025chameleon} employs leverage scores for flexible reweighting, and Velocitune \citep{luo2025velocitune} monitors the model's learning velocity to adjust sample weights in real-time. 
These strategies, often grounded in scaling law analyzes \citep{Shukor25ScalingLaw}, facilitate efficient convergence by ensuring the model maintains a steady learning signal throughout the massive-scale stable training process.

\paragraph{Data Strategies in annealing phase.}
The annealing phase transitions from stable training to convergence, shifting the objective from language acquisition to capability consolidation. As shown in Table \ref{tab:anneal_data_matrix}, recent models upsample specific datasets during this stage to enhance performance. For instance, DeepSeek-V3, Qwen-3, and YuLan-Mini mix math, code, and science data to improve reasoning \citep{deepseek, qwen3, Hu2024Yulan}. Similarly, GLM-4, Kimi-1.5, and Llama-3 integrate long-context data to extend sequence processing \citep{Zeng2024ChatGLM, Kimi2025, llama3}. However, these strategies remain empirical and lack a principled understanding of the selection mechanisms during this phase.

\begin{table}[htbp]
\centering
\footnotesize
\newcommand{\cmark}{\ding{51}}
\newcommand{\xmark}{\phantom{\ding{51}}}
\newcolumntype{C}{>{\centering\arraybackslash}X}

\caption{A comparison of data prioritization strategies during the annealing phase across leading LLMs. Checkmarks ($\checkmark$) indicate categories where specific upsampling or curriculum shifts have been reported.}
\label{tab:anneal_data_matrix}

\begin{tabularx}{\columnwidth}{lCCCC}
\toprule
\textbf{Model} & \textbf{Math} & \textbf{Code} & \textbf{Science} & \textbf{Long} \\
\midrule
Llama-2-L    & \xmark & \xmark & \xmark & \cmark \\
Llama-3      & \cmark & \cmark & \xmark & \xmark \\
DeepSeek-V3  & \cmark & \cmark & \cmark & \cmark \\
Qwen-3       & \xmark & \cmark & \cmark & \cmark \\
Qwen-2.5-1M  & \xmark & \xmark & \xmark & \cmark \\
Ministral-3  & \xmark & \xmark & \xmark & \cmark \\ 
YuLan-Mini   & \cmark & \cmark & \cmark & \cmark \\
GLM-4        & \xmark & \xmark & \xmark & \cmark \\
Kimi-1.5     & \cmark & \cmark & \xmark & \cmark \\
\bottomrule
\end{tabularx}
\end{table}

\section{Preliminary Study}
\label{sec:preliminary}
To establish a principled understanding of the annealing stage, we characterize it not merely as a learning rate decay , but as a critical transition in the model's spectral navigation of the loss landscape~\cite{DBLP:journals/corr/GoyalDGNWKTJH17,DBLP:conf/iclr/Wen0WHL025}.  We build upon the empirical observation that neural loss landscapes exhibit a skewed eigenspectrum, with a minority of high-curvature directions and a vast, flat subspace. In this geometric view, the stable training phase can be seen as the model operating in a high-entropy regime where significant learning rates induce a large noise scale~\cite{DBLP:conf/icml/WuHXHBZ20,DBLP:conf/iclr/KeskarMNST17}. This noise, while providing a beneficial self-stabilization effect that repels the model from sharp minima, geometrically manifests as dominant updates along the high-curvature (stiff) subspace $\mathcal{S}_\perp$. These transverse components largely cancel out over iterations, yielding only a weak net signal for descending the valley floor (the low-curvature subspace $\mathcal{S}_\parallel$), consistent with observed slow descent prior to annealing.

The annealing phase, therefore, represents a targeted refinement process where the dynamics shift from exploration to exploitation of the low-curvature subspace. For this transition to be effective, the optimization must satisfy two distinct geometric conditions: mitigating noise-amplified perturbations in $\mathcal{S}_\perp$ to stabilize, while preserving a coherent descent signal in $\mathcal{S}_\parallel$ to accelerate along the flat valley floor. Our framework hypothesizes that the empirical success of certain training schedules stems from their implicit ability to satisfy these spectral constraints. In the following section, we formalize this property and actively control it through the design of a selection algorithm.

\section{Methodology}
We first establish the problem formulation in Section~\ref{sec:problem}, defining the sample selection task. In Section~\ref{sec:proposed}, we introduce the construction of our optimization problem. We then derive the Successive Convex Approximation (SCA) solver in Section~\ref{sec:optimization} to address the non-convex alignment constraints. Finally, Section~\ref{sec:implementation} provides the implementation details, including the sketching-based approximation for high-dimensional Hessian analysis.

\subsection{Problem Formulation}
\label{sec:problem}
Let \( D_{\text{train}} = \{x_i\}_{i=1}^N \) be the training dataset of text sequences, and let \( \ell_{\text{pre}}(x; \theta) \) denote the self-supervised pretraining loss for a model with parameters \( \theta \in \mathbb{R}^d \). 
Standard training minimizes the empirical risk \( \mathcal{L}_{\text{ERM}}(\theta) = \frac{1}{N}\sum_{i=1}^N \ell_{\text{pre}}(x_i; \theta) \). We study post-hoc sample reweighting or selection during the late-phase training regime, where the model is near a local optimum. 
Let \( w = (w_1,\dots,w_N) \in \mathcal{W} \) be a vector of sample weights; the feasible set \( \mathcal{W} \) encodes the chosen strategy: for reweighting, \( \mathcal{W} \subseteq \mathbb{R}_{\ge 0}^N \) with a normalization constraint, and for selection, \( \mathcal{W} \subseteq \{0,1\}^N \) with a cardinality constraint \( \sum_i w_i = K \). 
Both modify the training objective to \( \mathcal{L}_{\text{pre}}(\theta; w) = \frac{1}{\sum_{i} w_i} \sum_{i=1}^N w_i \, \ell_{\text{pre}}(x_i; \theta) \).

We assume access to a held-out validation set \( D_{\text{val}} = \{x_j^{\text{val}}\}_{j=1}^M \) from the same pretraining distribution, used to guide weight optimization via the validation loss \( \mathcal{L}_{\text{val}}(\theta) = \frac{1}{M} \sum_{j=1}^M \ell_{\text{pre}}(x_j^{\text{val}}; \theta) \). 
To evaluate downstream performance after pretraining, we use a supervised posttraining dataset \( D_{\text{post}} = \{(x_k^{\text{post}}, y_k^{\text{post}})\}_{k=1}^L \), e.g., an supervised fine-tuning (SFT) benchmark, with loss \( \ell_{\text{post}}(x, y; \theta) \).

Let \( \theta_{\text{base}} \) denote the model obtained by standard ERM training. 
Starting from \( \theta_{\text{base}} \), we perform a few additional training steps using the weighted loss \( \mathcal{L}_{\text{pre}}(\theta; w) \) to obtain adjusted parameters \( \theta_{\text{adj}}(w) \). 
Our goal is to maximize the post-training performance gain:
\begin{equation}
\max_{w \in \mathcal{W}} \; \left[ \mathcal{L}_{\text{post}}(\theta_{\text{base}}) - \mathcal{L}_{\text{post}}(\theta_{\text{adj}}(w)) \right],
\label{eq:main-optim}
\end{equation}
where \( \mathcal{L}_{\text{post}}(\theta) = \frac{1}{L} \sum_{k=1}^L \ell_{\text{post}}(x_k^{\text{post}}, y_k^{\text{post}}; \theta) \).

\subsection{Annealing Data Selection as an Optimization}\label{sec:proposed}
We begin by defining the model's parameter state $\theta_s \in \mathbb{R}^d$ at the end of stable training. 
To characterize the loss landscape geometry around $\theta_s$, we introduce a validation set \( D_{\text{val}} \) representing the target data distribution and calculate the corresponding Hessian matrix as shown in Definition~\ref{def:val-hessian}:

\begin{tcolorbox}[
  colback=lightpurple,
  colframe=lightpurple,
  sharp corners=all,
  boxrule=0mm,
  boxsep=0.5mm,
  left=1.5mm,
  right=1.5mm,
  top=1.2mm,
  bottom=1.2mm
]
\begin{definition}[Validation Hessian Matrix]
\label{def:val-hessian}
The validation Hessian matrix at $\theta_s$ is defined as:
\[
H_{\mathrm{val}} \triangleq \nabla^2_{\theta}\, \mathbb{E}_{x \sim D_{\mathrm{val}}} \big[ \ell(x;\theta) \big]\Big|_{\theta=\theta_s}.
\]
Let $H_{\mathrm{val}} = V \Lambda V^\top$ be its spectral decomposition, where $\Lambda = \mathrm{diag}(\lambda_1,\dots,\lambda_d)$ with $\lambda_1 \ge \cdots \ge \lambda_d \ge 0$, and $V = [v_1,\dots,v_d]$ is the orthonormal eigenbasis.
\end{definition}
\end{tcolorbox}
The eigenspectrum of $H_{\mathrm{val}}$ captures the anisotropic curvature of the loss landscape. To formalize the distinction between high- and low-curvature directions, we partition the parameter space into two complementary subspaces using a curvature threshold $\epsilon > 0$. The stiff modes subspace consists of directions with eigenvalues greater than $\epsilon$, while the flat valleys subspace consists of directions with eigenvalues less than or equal to $\epsilon$. Formally, we introduce the following spectral partitioning: 
\begin{tcolorbox}[
  colback=lightpurple,
  colframe=lightpurple,
  sharp corners=all,
  boxrule=0mm,
  boxsep=0.5mm,
  left=1.5mm,
  right=1.5mm,
  top=1.2mm,
  bottom=1.2mm
]
\begin{definition}[Spectral Subspace Partitioning]
\label{def:spectral-partition}
Given a curvature threshold $\epsilon > 0$, we partition the parameter directions into two disjoint subspaces:
\begin{align*}
    \mathcal{I}_{\perp} & = \{ j \mid \lambda_j > \epsilon \}, \quad \text{and} \quad \mathcal{S}_{\perp} = \operatorname{span}\{v_j : j \in \mathcal{I}_{\perp}\}, \\
    \mathcal{I}_{\parallel} & = \{ j \mid \lambda_j \le \epsilon \}, \quad \text{and} \quad \mathcal{S}_{\parallel} = \operatorname{span}\{v_j : j \in \mathcal{I}_{\parallel}\}.
\end{align*}
Here $\mathcal{S}_{\perp}$ denotes the stiff modes subspace (directions of high curvature) and $\mathcal{S}_{\parallel}$ the flat valleys subspace (directions of low curvature).
\end{definition}
\end{tcolorbox}
Based on the above partitioning, we connect the global geometry with sample-level gradient dynamics. For each training sample \(x_i \in D_{\text{train}}\), we project its gradient at \(\theta_s\) onto each eigenvector \(v_j\).
\begin{tcolorbox}[
  colback=lightpurple,
  colframe=lightpurple,
  sharp corners=all,
  boxrule=0mm,
  boxsep=0.5mm,
  left=1.5mm,
  right=1.5mm,
  top=1.2mm,
  bottom=1.2mm
]
\begin{definition}[Sample-wise Spectral Projections]
\label{def:dir-stats}
For a training sample \(x_i\), we define its gradient projection onto \(v_j\) at \(\theta_s\) as:
\(
g_{i,j} \triangleq \langle \nabla \ell_{\text{pre}}(x_i;\theta_s), v_j \rangle.
\)
\end{definition}
\end{tcolorbox}

We now formulate sample selection as an optimization problem that explicitly encodes the anisotropic curvature of the loss landscape. Our objective is to select samples that collectively produce a large gradient component in the flat subspace $\mathcal{S}_{\parallel}$ (to drive progress along low-curvature directions) while limiting the gradient energy in the stiff subspace $\mathcal{S}_{\perp}$ (to avoid instability). Formally, we propose the following optimization problem:
\begin{equation}
\label{eq:spectral-optim}
\begin{aligned}
\max_{w \in \mathcal{W}} \quad & \left\| \sum_{i=1}^N w_i \bm{g}_{i, \mathcal{I}_{\parallel}} \right\|_2^2 \\
\text{s.t.} \quad & \sum_{j \in \mathcal{I}_{\perp}} \lambda_j \sum_{i=1}^N w_i g_{i,j}^2 \le \tau_\perp,
\end{aligned}
\end{equation}
where $\bm{g}_{i, \mathcal{I}_{\parallel}} = (g_{i,j})_{j \in \mathcal{I}_{\parallel}}$ is the projection of $\nabla \ell_{\text{pre}}(x_i;\theta_s)$ onto the flat subspace and $\tau_\perp > 0$ is a threshold controlling the allowed gradient energy in stiff directions.
For data selection, we have $\mathcal{W} = \{ w \in \{0,1\}^N \mid \sum_{i=1}^N w_i = K \}$, where $K$ is the number of samples to select.

\begin{algorithm}[t]
\caption{The Overall Pipeline of \textbf{DiReCT}}
\label{alg:direct-pipeline}
\begin{algorithmic}[1]
\REQUIRE Pre-trained model $\theta_s$, training set $D_{\text{train}}$, validation set $D_{\text{val}}$, selection count $K$ or reweighting flag
\ENSURE Fine-tuned model $\theta_{\text{adj}}$
\STATE Generate random matrix $R \in \mathbb{R}^{k \times d}$ with $R_{ij} \sim \mathcal{N}(0,1/k)$
\FOR{each $x \in D_{\text{val}}$}
    \STATE Compute $z_x \gets R \nabla_\theta \ell(x; \theta_s) \in \mathbb{R}^k$
\ENDFOR
\STATE Form $\tilde{H} \gets \frac{1}{M}\sum_{x \in D_{\text{val}}} z_x z_x^\top$
\STATE Compute $\tilde{H} = \tilde{V} \tilde{\Lambda} \tilde{V}^\top$ (eigendecomposition)
\FOR{each $x_i \in D_{\text{train}}$}
    \STATE Compute $z_i \gets R \nabla \ell_{\mathrm{pre}}(x_i; \theta_s)$
    \STATE Set $g_{i,j} \gets \sqrt{k} \cdot z_i^\top \tilde{v}_j$ for $j=1,\dots,k$
\ENDFOR
\STATE Partition $\{1,\dots,k\}$ into $\mathcal{I}_{\perp}$ and $\mathcal{I}_{\parallel}$ using threshold $\epsilon$
\STATE Solve Eq.~\eqref{eq:spectral-optim} via Algorithm~\ref{alg:sca} to obtain weights $w$
\STATE Fine-tune $\theta_s$ with weighted loss $\mathcal{L}_{\text{pre}}(\theta; w)$ for a few steps $\rightarrow \theta_{\text{adj}}$
\STATE \textbf{return} $\theta_{\text{adj}}$
\end{algorithmic}
\end{algorithm}

\subsection{Optimization Solver}
\label{sec:optimization}

The optimization problem in Equation~\eqref{eq:spectral-optim} presents computational challenges due to its combinatorial nature for selection ($w_i \in \{0,1\}$) and non-convex structure. For data selection, Equation~\eqref{eq:spectral-optim} reduces to a cardinality-constrained binary quadratic program, which is NP-hard and intractable for typical pretraining dataset sizes ($N \sim 10^5-10^6$). For data reweighting ($w_i \ge 0$), the maximization of a convex quadratic function over a convex set remains non-convex. We therefore adopt a two-stage approach: first solve a continuous relaxation, then recover discrete solutions for selection through rounding. We begin by relaxing the binary constraint to $w_i \in [0,1]$.
Then we employ a Successive Convex Approximation (SCA) solver. The key idea is to iteratively construct and solve convex approximations of the original problem. At iteration $t$ with current solution $w^{(t)}$, we use the fact that for any $w, w^{(t)}$:

\[
\|G w\|_2^2 \ge 2(G w^{(t)})^\top (G w) - \|G w^{(t)}\|_2^2,
\]

where $G$ is the $|\mathcal{I}_{\parallel}| \times N$ matrix with columns $\bm{g}_{i, \mathcal{I}_{\parallel}}$. This inequality follows from the convexity of $\|G w\|_2^2$ in $w$. The right-hand side provides a linear minorizer that touches the original objective at $w^{(t)}$. We thus solve the convex approximation:
\begin{equation}
\label{eq:sca-subproblem}
\begin{aligned}
\max_{w \in [0,1]^N} \quad & 2(G w^{(t)})^\top (G w) \\
\text{s.t.} \quad & \sum_{j \in \mathcal{I}_{\perp}} \lambda_j \sum_{i=1}^N w_i g_{i,j}^2 \le \tau_\perp, \quad \sum_{i=1}^N w_i = K.
\end{aligned}
\end{equation}

Equation~\eqref{eq:sca-subproblem} is a linear program with a convex quadratic constraint. We solve it efficiently using projected gradient descent with Dykstra's projection algorithm for intersecting convex sets, which handles the combination of box constraints ($w_i \in [0,1]$), equality constraint ($\sum_i w_i = K$), and quadratic constraint. The complete algorithm proceeds as follows. We initialize $w^{(0)} = (K/N, \dots, K/N)$ and iterate until convergence (defined as $\|w^{(t+1)} - w^{(t)}\|_2 < \delta$). For data selection, we apply deterministic rounding: sort $w_i^*$ in descending order and select the top $K$ samples. Optionally, we can use randomized rounding with probability proportional to $w_i^*$, which provides approximation guarantees while maintaining the cardinality constraint exactly.

\subsection{Implementation Details of \textbf{DiReCT}}\label{sec:implementation}

\paragraph{Hessian Approximation via Randomized Sketching.}
Direct eigendecomposition of the full Hessian $H_{\mathrm{val}} \in \mathbb{R}^{d \times d}$ is computationally prohibitive for large language models ($d \sim 10^9$). We instead approximate its dominant spectral structure using randomized sketching. Let $k \ll d$ be a target subspace dimension. We construct a random projection matrix $R \in \mathbb{R}^{k \times d}$ with i.i.d. $\mathcal{N}(0, 1/k)$ entries, set $z_j := R\nabla_\theta \ell(x_j^{\mathrm{val}}; \theta_s)\in\mathbb{R}^k$, and compute the sketched curvature matrix
\[
\tilde{H} \triangleq \frac{1}{M} \sum_{j=1}^M z_j z_j^\top \in \mathbb{R}^{k \times k}.
\]
Its eigendecomposition $\tilde{H} = \tilde{V} \tilde{\Lambda} \tilde{V}^\top$ yields approximate eigenvalues $\lambda_j \approx \tilde{\lambda}_j$ and eigenvectors $v_j \approx \sqrt{k} R^\top \tilde{v}_j$. These define the spectral partitioning $\mathcal{I}_{\perp}$ and $\mathcal{I}_{\parallel}$ according to Definition~\ref{def:spectral-partition}. Gradient projections are computed efficiently as $g_{i,j} \approx \sqrt{k} \langle R \nabla \ell_{\mathrm{pre}}(x_i; \theta_s), \tilde{v}_j \rangle$, reducing storage from $O(Nd)$ to $O(Nk)$.

\paragraph{The Overall Pipeline.}
Algorithm~\ref{alg:direct-pipeline} outlines the complete \textbf{DiReCT} procedure. Given a pre-trained model $\theta_s$, we first approximate the Hessian eigenspace using randomized sketching: validation gradients are projected onto a low-dimensional subspace to form a sketched covariance matrix $\tilde{H}$, whose eigen-decomposition yields approximate curvature directions $\tilde{v}_j$ and eigenvalues $\tilde{\lambda}_j$. We then project each training-sample gradient onto these directions to obtain coefficients $g_{i,j}$. Using a curvature threshold $\epsilon$, the directions are partitioned into stiff ($\mathcal{I}_{\perp}$) and flat ($\mathcal{I}_{\parallel}$) subspaces. With these projections, the optimization problem (Eq.~\eqref{eq:spectral-optim}) is solved via SCA to obtain sample weights $w^*$. Finally, starting from $\theta_s$, a few fine-tuning steps are performed using the weighted loss, yielding the adjusted model $\theta_{\text{adj}}$. The pipeline returns both the optimized weights and the fine-tuned model.

\subsection{Theoretical Analysis}
\label{sec:theory-main}
For the theoretical analysis we work with the regularized validation Hessian $H_\rho(\theta) \triangleq H_{\mathrm{val}}(\theta) + \rho I$ for some small $\rho>0$ that ensures positive definiteness, and measure flatness around the late-phase parameter $\theta_s$ via the inverse-trace functional
\(S(\theta)=\operatorname{Tr}\!\bigl(H_\rho(\theta)^{-1}\bigr)=\sum_{i=1}^d 1/\lambda_i(\theta),\)
which is large precisely when $\theta$ sits in a flat valley of the loss landscape. Under mild spectral and Lipschitz regularity (formal assumptions in Appendix~\ref{sec:theory}), the surrogate $S$ exhibits a structural preference for flat eigendirections: moving along an eigenvector $v_k$ associated with a small eigenvalue $\lambda_k$ provably increases $S$ and contracts the corresponding curvature, as formalized below (proof in Appendix~\ref{sec:theory}).

\begin{tcolorbox}[
  colback=lightpurple,
  colframe=lightpurple,
  sharp corners=all,
  boxrule=0mm,
  boxsep=0.5mm,
  left=1.5mm,
  right=1.5mm,
  top=1.2mm,
  bottom=1.2mm
]
\begin{theorem}[Flat-direction preference]
\label{thm:flat-direction}
Under the spectral regularity assumptions of Appendix~\ref{sec:theory},
\begin{equation}
\partial_{v_k}S(\theta)\ge\frac{c_0}{2\lambda_k(\theta)^2}>0.
\label{eq:grad-bias}
\end{equation}
Consequently, for any sufficiently small $\eta>0$, $S(\theta+\eta v_k)>S(\theta)$ and $\lambda_k(\theta+\eta v_k)<\lambda_k(\theta)$.
\end{theorem}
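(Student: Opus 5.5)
We will use the first-order perturbation formula for the eigenvalues of the symmetric matrix $H_\rho(\theta)$. Assume, as in the spectral regularity assumptions of Appendix~\ref{sec:theory}, that $\lambda_k(\theta)$ is simple with a spectral gap. Then $\lambda_i(\theta)$ and $v_i(\theta)$ are differentiable in a neighbourhood of $\theta$, and Hellmann--Feynman gives
\[
\partial_{u}\lambda_i(\theta)=v_i(\theta)^\top\bigl(\partial_u H_\rho(\theta)\bigr)v_i(\theta)
\]
for any direction $u$.

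Since $S(\theta)=\sum_i \lambda_i(\theta)^{-1}=\operatorname{Tr}(H_\rho^{-1})$, differentiating along $v_k$ yields
\[
\partial_{v_k}S(\theta)=-\operatorname{Tr}\bigl(H_\rho^{-1}(\partial_{v_k}H_\rho)H_\rho^{-1}\bigr)=-\sum_{i}\frac{\partial_{v_k}\lambda_i(\theta)}{\lambda_i(\theta)^2}.
\]
We split this sum into the diagonal term $i=k$ and the off-diagonal remainder $i\neq k$. For the diagonal term we invoke the appendix's self-contraction hypothesis, which we expect to read $v_k^\top(\partial_{v_k}H_{\mathrm{val}})v_k\le -c_0$ for flat directions. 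This gives $\partial_{v_k}\lambda_k\le -c_0$, so the diagonal term contributes at least $c_0/\lambda_k^2$.

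For the remainder, Lipschitz continuity of the Hessian with constant $L_H$ gives $|\partial_{v_k}\lambda_i|\le \|\partial_{v_k}H_\rho\|_{\mathrm{op}}\le L_H$. The remainder is therefore at most $L_H\sum_{i\ne k}\lambda_i^{-2}$ in absolute value.

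The main obstacle is showing that this cross term is at most $c_0/(2\lambda_k^2)$. We would rely on an assumption that $\lambda_k$ lies in the small-eigenvalue regime. A natural form is a condition such as $L_H\lambda_k^2\sum_{i\neq k}\lambda_i^{-2}\le c_0/2$, which holds when $\lambda_k$ is sufficiently small relative to the other eigenvalues (all bounded below by $\rho$). If the appendix instead states a sharper off-diagonal coupling bound, for example $|v_i^\top(\partial_{v_k}H)v_i|$ small for $i\ne k$, we would substitute it at this step. Either way, combining the two terms gives $\partial_{v_k}S\ge c_0/\lambda_k^2-c_0/(2\lambda_k^2)=c_0/(2\lambda_k^2)$, which is \eqref{eq:grad-bias}.

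For the consequences, $S$ is $C^1$ near $\theta$ because $H_\rho\succ 0$ and $H_\rho$ is $C^1$. Taylor's theorem then gives $S(\theta+\eta v_k)=S(\theta)+\eta\,\partial_{v_k}S(\theta)+o(\eta)$, which is strictly larger than $S(\theta)$ for small $\eta>0$. Likewise, $\lambda_k(\theta+\eta v_k)=\lambda_k(\theta)+\eta\,\partial_{v_k}\lambda_k+o(\eta)$ with $\partial_{v_k}\lambda_k\le-c_0<0$, so $\lambda_k(\theta+\eta v_k)<\lambda_k(\theta)$ for small $\eta>0$. Differentiability of $\lambda_k$ along the path follows from the assumed spectral gap.
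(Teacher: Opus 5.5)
Your proposal is correct and is essentially the paper's proof. Both split $\partial_{v_k}S=-\sum_i\partial_{v_k}\lambda_i/\lambda_i^2$ into the $i=k$ term, which is at least $c_0/\lambda_k^2$ because $\partial_{v_k}\lambda_k\le -c_0$, and the cross terms, which are bounded by a constant times $M=\sum_{i\neq k}\lambda_i^{-2}$ and absorbed by the gap condition. Both then finish with the same first-order Taylor argument for the two consequences.

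The only difference is how the cross terms are controlled. The paper assumes $|\partial_{v_k}\lambda_i|\le c_1$ for $i\neq k$ directly, with gap condition $\lambda_k^2\le c_0/(2c_1M)$; this is exactly the substitution you anticipated. Your Hellmann--Feynman/$L_H$ bound is a valid choice $c_1=L_H$, but the same argument also gives $c_0\le|\partial_{v_k}\lambda_k|\le L_H$. So your version of the gap condition would force $\lambda_k^2M\le 1/2$, i.e.\ $\lambda_i\ge\sqrt{2}\,\lambda_k$ for every $i\neq k$. The paper's separate off-diagonal constant $c_1$ avoids that stronger requirement.
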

\end{tcolorbox}

Intuitively, flatness is exactly what a generalization bound charges for: PAC-Bayes guarantees penalize a model by the local curvature of the loss, so flatter basins mean smaller train--test gaps. Theorem~\ref{thm:flat-direction} shows that a step along a flat direction flattens the landscape further, and Theorem~\ref{thm:pacbayes-improvement} (Appendix~\ref{sec:theory}) turns this gain into an explicit generalization guarantee with a shrinking curvature penalty. This is precisely what our objective~\eqref{eq:spectral-optim} induces by concentrating mass on gradients in the flat subspace $\mathcal{S}_\parallel$, so the data \textbf{DiReCT} selects drives training toward flatter solutions that satisfy the generalization guarantee of Theorem~\ref{thm:pacbayes-improvement}.

\section{Experiments}

In this section, we provide a comprehensive evaluation of our proposed approach. We begin by detailing the experimental setup in Section \ref{subsec:setup}, covering the datasets, baselines, and technical configurations used in our study.
We then report the main results in Section~\ref{subsec:main} to highlight the performance gains over existing methods. Finally, we provide practical guidance on hyperparameter selection in Section~\ref{subsec:hyper} and decipher the sample selection logic learned by \textbf{DiReCT} in Section~\ref{subsec:analysis} to interpret what kinds of data drive the observed improvements.

\begin{table}[h]
    \centering
    \caption{Model architectures used in our experiments.}
    \label{tab:model-arch} 
    \resizebox{0.8\columnwidth}{!}{%
    \begin{tabular}{lcc}
    \toprule
     &  GPT-2-Medium & Llama-1.1B \\
    \midrule
    Parameters      & 355M  & 1.1B \\
    Layers          & 36  & 22  \\
    Attention Heads & 24  & 32  \\
    Embedding Dim.  & 768  & 2048  \\
    Hidden Dim.     & 3072  & 2048 \\
    Max Seq. Length & 512  & 2048 \\
    \bottomrule
    \end{tabular}%
    }
    \end{table}
    
    \begin{table}[htbp]
    \centering
    \caption{Composition of pre-training corpora by byte percentage. For The Pile, due to space limitations we report the top-6 sources and aggregate the remaining 16 specialized domains into a single ``Others'' entry.}
    \label{tab:dataset_composition}
    \resizebox{\columnwidth}{!}{%
    \begin{tabular}{lc @{\hspace{2.5em}} lc}
    \toprule
    \multicolumn{2}{c}{\textbf{SlimPajama (for GPT-2-Medium)}} & \multicolumn{2}{c}{\textbf{The Pile (for Llama-1.1B)}} \\
    \cmidrule(r){1-2} \cmidrule(l){3-4}
    \textbf{Data Source} & \textbf{Ratio (\%)} & \textbf{Data Source} & \textbf{Ratio (\%)} \\
    \midrule
    Common Crawl    & 54.10 & Pile-CC              & 18.21 \\
    C4              & 28.70 & PubMed Central       & 14.48 \\
    GitHub          & 4.20  & Books3$^{\dagger}$   & 12.14 \\
    Books           & 3.70  & OpenWebText2$^{\dagger}$ & 10.07 \\
    ArXiv           & 3.40  & ArXiv                & 9.01  \\
    Wikipedia       & 3.10  & GitHub               & 7.63  \\
    StackExchange   & 2.80  & Others (16 sources)  & 28.46 \\
    \bottomrule
    \end{tabular}}
    \end{table}
\subsection{Experimental Setup}
\label{subsec:setup}
\paragraph{Models and Pretraining Data.} 
To obtain the stable late-phase checkpoints $\theta_s$ essential for our curvature analysis, we pretrain multiple models across various scales and architectures~\cite{rethinking_2026}. 
This selection is designed to investigate performance across distinct distributional characteristics.
For the GPT series, we pretrain GPT-2-Medium on the SlimPajama dataset \cite{soboleva2023slimpajama}, a deduplicated and quality-filtered corpus derived from RedPajama.
To verify the scalability of our method and to evaluate its adaptability to different model architectures, we also pretrain a Llama-1.1B model on The Pile~\cite{gao2021pile}, benefiting from its vast multi-domain exposure, particularly in specialized scientific and code sectors.
This dual-model setup provides a distributionally diverse proxy for evaluating specialized reasoning tasks.
The model architecture and dataset compositions are summarized in Table~\ref{tab:model-arch} and Table~\ref{tab:dataset_composition}.

\begin{table*}[htbp]
\centering
\caption{Performance comparison of \textbf{DiReCT} against various data selection baselines across two model scales (GPT-2-Medium 355M and Llama-1.1B) in the annealing stage. Subscripts denote the change relative to the Uniform Sampling baseline \emph{at the same scale} (positive in \textcolor{posred}{\textbf{red}}, negative in \textcolor{neggreen}{\textbf{green}}). All numbers are averaged over 5 random seeds. \textbf{DiReCT} attains the highest aggregated score at both scales, with the most pronounced gains on the math and code reasoning benchmarks (GSM8K, HumanEval) and on OBQA, SciQ, and ARC-E.}
\label{tab:main_results}
\resizebox{\textwidth}{!}{
\renewcommand{\arraystretch}{1.2}
\begin{tabular}{l ccccc cc cc c}
\toprule
\multirow{2}{*}{Method} & \multicolumn{7}{c}{Commonsense Reasoning} & \multicolumn{2}{c}{Math \& Code Reasoning} & Aggregated \\
\cmidrule(lr){2-8} \cmidrule(lr){9-10} \cmidrule(lr){11-11}
& Hella. & PiQA & OBQA & COPA & Wino. & SciQ & ARC-E & GSM8K & HumanE & Avg. \\
\midrule
\multicolumn{11}{c}{\textit{\textbf{GPT-2-Medium (355M)}}} \\
\midrule
\rowcolor[HTML]{FAFAFA}
Uniform Sampling  & 26.2 & 55.8 & 11.3 & 57.6 & 49.4 & 43.1 & 27.2 & 0.2 & 1.2 & 30.2 \\
Perplexity-based  & 25.3 & 54.5 & 12.4 & 59.1 & 51.0 & 44.6 & 29.8 & 0.4 & 1.8 & 31.0 \\
Loss-based        & 24.8 & 55.2 & 11.9 & 58.2 & 50.3 & 43.7 & 28.4 & 0.3 & 1.2 & 30.4 \\
GradNorm (IS)     & 27.2 & 53.4 & 13.2 & 58.7 & 50.9 & 45.2 & 30.6 & 0.5 & 1.8 & 31.3 \\
InfoBatch         & 27.5 & 56.8 & 12.7 & 59.4 & 51.5 & 46.0 & 31.7 & 0.7 & 3.0 & 32.1 \\
\rowcolor[HTML]{EFEFEF}
\textbf{DiReCT} (Ours) & 25.8\ddown{0.4} & 56.2\dup{0.4} & 13.5\dup{2.2} & 58.2\dup{0.6} & 48.8\ddown{0.6} & 48.5\dup{5.4} & 34.0\dup{6.8} & 1.5\dup{1.3} & 3.8\dup{2.6} & 32.3\dup{2.1} \\
\midrule
\multicolumn{11}{c}{\textit{\textbf{Llama-1.1B}}} \\
\midrule
\rowcolor[HTML]{FAFAFA}
Uniform Sampling  & 29.0 & 58.2 & 27.1 & 66.2 & 50.8 & 61.2 & 41.3 & 1.1 & 6.2 & 37.9 \\
Perplexity-based  & 29.4 & 57.4 & 28.2 & 65.0 & 49.5 & 63.1 & 42.4 & 1.5 & 7.6 & 38.2 \\
Loss-based        & 28.7 & 59.6 & 26.4 & 68.1 & 52.0 & 60.5 & 41.0 & 2.0 & 6.8 & 38.3 \\
GradNorm (IS)     & 30.4 & 58.7 & 28.9 & 64.6 & 50.9 & 62.8 & 44.3 & 1.4 & 8.3 & 38.9 \\
InfoBatch         & 31.2 & 60.5 & 28.4 & 69.3 & 52.4 & 65.6 & 43.7 & 2.5 & 7.1 & 40.1 \\
\rowcolor[HTML]{EFEFEF}
\textbf{DiReCT} (Ours)     & 30.7\dup{1.7} & 57.5\ddown{0.7} & 28.8\dup{1.7} & 65.1\ddown{1.1} & 51.2\dup{0.4} & 66.5\dup{5.3} & 47.0\dup{5.7} & 4.5\dup{3.4} & 11.2\dup{5.0} & 40.3\dup{2.4} \\
\bottomrule
\end{tabular}
}
\end{table*}

\begin{figure*}[ht]
    \centering
    \includegraphics[width=\textwidth]{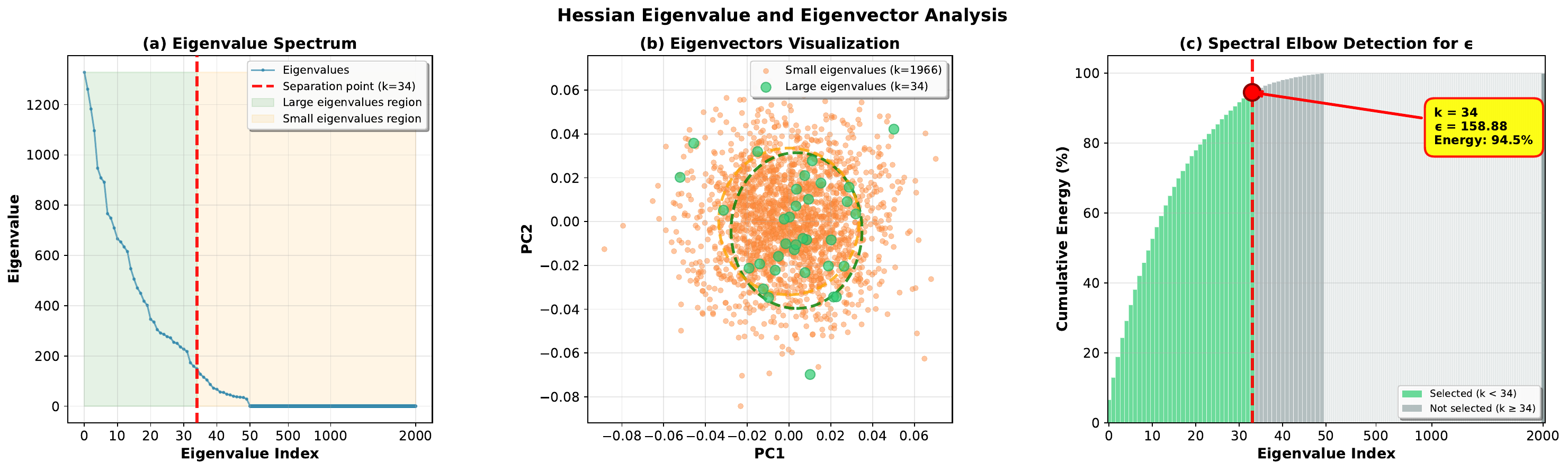}
    \caption{\textbf{Hessian Eigenvalue and Eigenvector Analysis for GPT-2-Medium.} (a) The eigenvalue spectrum exhibits a sharp power-law decay, where the green region denotes the stiff subspace and the orange region represents the flat subspace. (b) PCA projection of eigenvectors, illustrating the directional concentration of high-energy components versus the diffusion of low-energy ones. (c) Spectral elbow detection based on cumulative energy, identifying $k=34$ as the optimal separation point ($\epsilon = 158.88$) covering 94.5\% of the total spectral energy.}
    \label{fig:hessian_analysis}
\end{figure*}

\paragraph{Annealing Datasets.}
To evaluate our proposed selection method during annealing, we construct a representative training mixture $D_{\text{ann}}$ by subsampling from three specialized corpora: MathPile~\cite{wang2024mathpile} for mathematical reasoning, StarCoderData~\cite{starcoder_2023} for code, and the Dolma~3 Longmino Mix~\cite{li2024dolma} for long-context text. We preserve the relative proportions of the source corpora so that $D_{\text{ann}}$ remains a distributionally faithful proxy for specialized reasoning tasks while staying computationally tractable for the required second-order spectral analysis. Detailed statistics of the resulting mixture are provided in Appendix~\ref{appendix:annealing_data}.

\paragraph{Benchmarks.}
We evaluate our method on a suite of benchmarks spanning two key reasoning domains. For commonsense reasoning, we employ HellaSwag \cite{zellers2019hellaswag}, PiQA \cite{bisk2020piqa}, OpenBookQA \cite{mihaylov2018openbookqa}, COPA \cite{sarlin2020copa}, ARC-Easy \cite{clark2018arc}, and SciQ \cite{welbl2017sciq} and WinoGrande \cite{sakaguchi2021winogrande}. For mathematical and complex reasoning, we use GSM8K \cite{cobbe2021gsm8k}, HumanEval \cite{chen2021evaluatinglargelanguagemodels}. All evaluations are reported with standard accuracy for most tasks and pass@1 for HumanEval.


\paragraph{Baselines.}
To rigorously evaluate the efficacy of \textbf{DiReCT}, we compare it against a suite of representative data selection baselines: Uniform Sampling, Perplexity-based selection~\cite{Hu2024Yulan}, Loss-based selection, GradNorm (IS)~\cite{katharopoulos2018not}, and InfoBatch~\cite{DBLP:conf/iclr/Qin0ZGPXZSSX024}. To ensure a fair comparison, we enforce a strict data budget across all methods by fixing the selection ratio at 80\% of the original meta-dataset. This controlled experimental design ensures that the performance variations observed in Table~\ref{tab:main_results} are attributable solely to the quality of the selection logic rather than the volume of training data.

\subsection{Main Results}\label{subsec:main}
The experimental results in Table~\ref{tab:main_results} yield three consistent findings across both model scales. First, \textbf{DiReCT} attains the highest aggregated score at both scales, gaining roughly two points over Uniform Sampling. This confirms that the improvements generalize across model capacity rather than relying on scale-specific tuning. Second, on the math and code reasoning benchmarks (GSM8K and HumanEval), \textbf{DiReCT} delivers clear gains that grow with model capacity, consistent with the intuition that larger models possess the latent reasoning ability that targeted data selection can unlock. Third, \textbf{DiReCT}'s effect on the commonsense reasoning benchmarks is heterogeneous: it lifts accuracy by several points on OBQA, SciQ, and ARC-E, while staying within roughly one point of Uniform Sampling on Hella., PiQA, COPA, and Wino. This pattern indicates that uniform sampling already suffices for surface-level linguistic patterns, whereas targeted gradient alignment is most valuable for navigating the geometric bottlenecks of deeper structural logic.
The comparison with the strongest magnitude-centric baseline, InfoBatch, is particularly informative. InfoBatch matches or exceeds \textbf{DiReCT} on several individual commonsense benchmarks, yet \textbf{DiReCT} overtakes it on the aggregated metric at both scales. This indicates that maximizing ``information density'' is secondary to ensuring ``update alignment.'' By projecting gradients onto the flat subspace of the validation Hessian, \textbf{DiReCT} biases the optimizer toward directions that generalize broadly, rather than toward sample difficulty in isolation. The small bidirectional fluctuations observed on commonsense tasks reflect a principled trade-off: imposing the stiff-mode constraint $\tau_{\perp}$ trades a fraction of the gradient energy on high-curvature directions in exchange for stable descent along the low-curvature valley floor, preserving foundational knowledge while aggressively specializing in the reasoning domains required for downstream applications.

\subsection{Practical Guidance for Hyperparameter Decision}\label{subsec:hyper}

A primary challenge in deploying \textbf{DiReCT} involves selecting the curvature threshold $\epsilon$, which defines the boundary between structural preservation and task adaptation. Rather than exhaustive grid searching, we propose a heuristic-based decision process grounded in the model's Hessian spectral dynamics. Importantly, we observe that the gradient energy budget $\tau_\perp$ is intrinsically coupled with the number of selected components $k$ and the required sample size for Hessian estimation. By modulating $\tau_\perp$, practitioners can identify the optimal $K$ that captures the dominant structural constraints while maintaining computational tractability.

The selection of $\epsilon$ is driven by the intrinsic dimensionality of the Hessian spectrum $H_{\mathrm{val}}$.
As illustrated in Figure~\ref{fig:hessian_analysis}(a), the eigenvalue distribution for GPT-2-Medium exhibits a characteristic power-law decay, suggesting that model sensitivity is concentrated within a remarkably low-dimensional manifold. To systematically decouple the parameter space, we employ a spectral elbow detection heuristic quantified by the cumulative energy $E(k) = (\sum_{i=1}^k \lambda_i) / (\sum_{j=1}^D \lambda_j)$.


Empirical results in Figure~\ref{fig:hessian_analysis}(c) demonstrate that the first $k=34$ eigenvalues account for 94.5\% of the total spectral energy. This ``elbow'' point serves as a geometric phase transition: the subspace $\mathcal{S}_{\perp}$ aligns with high-energy eigenvectors ($\lambda_i \ge \epsilon$) that constitute the steep, high-curvature walls of the loss landscape. These directions represent the rigid structural foundations of the pre-trained model, where any significant weight update would lead to catastrophic forgetting. Conversely, the remaining spectral energy resides in the flat subspace $\mathcal{S}_{\parallel}$ ($\lambda_i < \epsilon$), corresponding to the valley floor. The diffused nature of these low-energy eigenvectors in Figure~\ref{fig:hessian_analysis}(b) confirms that $\mathcal{S}_{\parallel}$ provides the necessary plasticity for adaptive fine-tuning. By tuning $\tau_\perp$ to reach this elbow, we ensure that the optimization trajectory is orthogonal to the stiffest constraints, thereby stabilizing the learning process without sacrificing the model's capacity for new task acquisition.

\begin{figure*}[ht]
    \centering
    \includegraphics[width=0.7\linewidth]{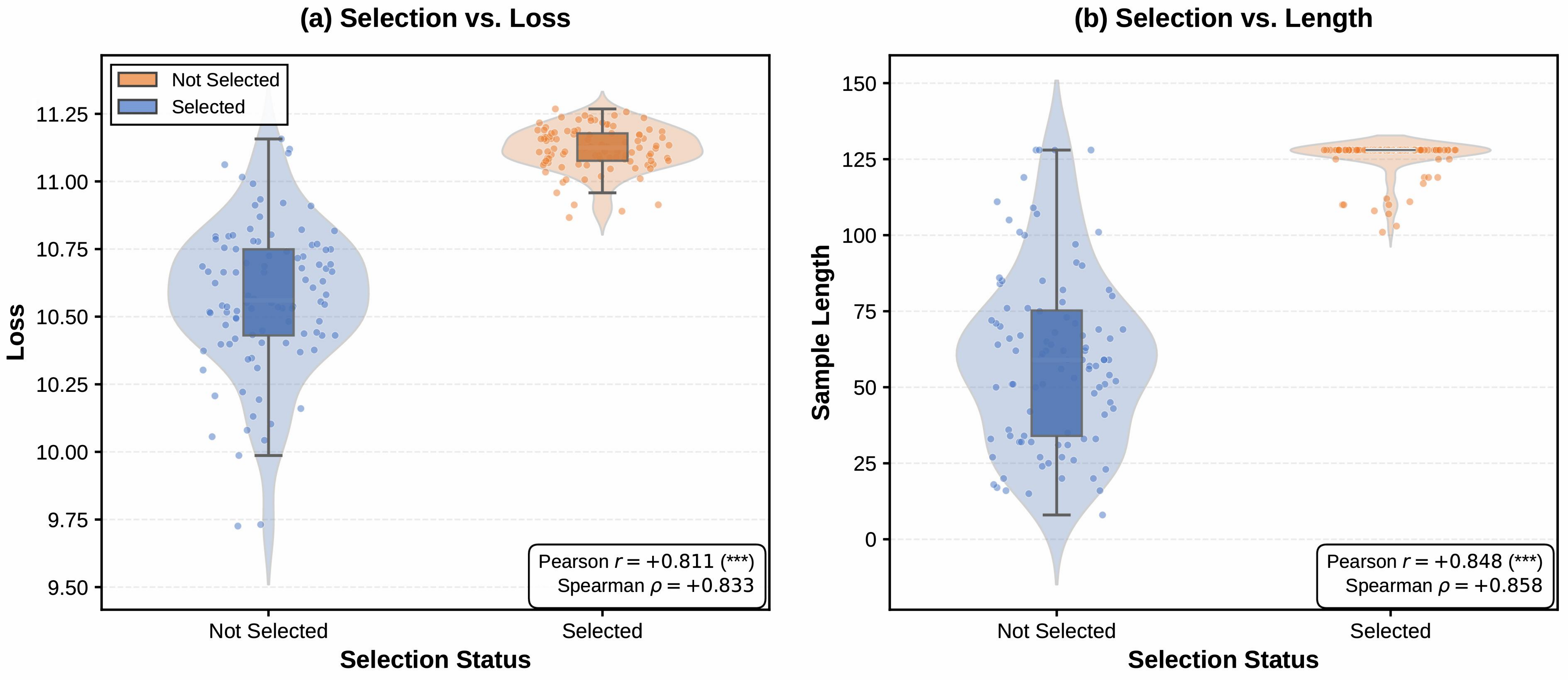} 
    \caption{\textbf{Sample selection under two extreme regimes.} We probe \textbf{DiReCT}'s selection behavior on the constructed annealing dataset under two extremes: (a) the high-loss extreme, showing the pre-training loss distribution of selected versus unselected samples; and (b) the short-length extreme, showing the sequence length distribution. \textbf{DiReCT} prioritizes high-loss, long-sequence samples that align with the flat curvature of the loss landscape.}
    \label{fig:selection_analysis}
\end{figure*}

\subsection{Deciphering the Selection Logic}
\label{subsec:analysis}
To understand the sample selection tendencies of \textbf{DiReCT}, we examine its behavior on two types of extreme samples: those with very high pre-training loss and those with very short sequence length. Figure~\ref{fig:selection_analysis}(a) shows that selected samples exhibit markedly higher loss than those not selected, with the distribution centered significantly higher. This aligns with the \textbf{DiReCT} objective of maximizing projections onto the flat subspace $\mathcal{S}_{\parallel}$. During the late-phase training regime, low-loss samples typically provide weak gradient signals, indicating that the model has already converged in those parameter directions. In contrast, high-loss samples represent unlearned patterns that supply the gradient energy needed to drive optimization along the low-curvature ``flat valleys.'' The strong correlation (Pearson $r = +0.811$) confirms that the algorithm effectively identifies samples with higher optimization potential.
Even when the absolute lengths in the candidate pool are modest, \textbf{DiReCT} consistently prefers the longer ones. Figure~\ref{fig:selection_analysis}(b) shows that selected samples concentrate near 125 tokens, while shorter sequences are largely excluded. From a dynamical perspective, longer sequences often contain complex semantic dependencies and higher information density, yielding gradient vectors with richer components in the spectral decomposition that align with the flat directions of the validation Hessian $H_{\text{val}}$.

\section{Conclusion}
\label{sec:conclusion}
We introduce \textbf{DiReCT}, a geometrically principled framework for annealing-phase data selection that replaces content-based heuristics with optimization-guided spectral analysis of the validation Hessian. The core idea is to upweight samples aligned with the low-curvature valley floor and suppress high-curvature noise, which mitigates the convergence stagnation typical of late-stage pre-training while remaining scalable to foundation-model sizes. We further give a PAC-Bayesian analysis showing that the induced flat-direction update contracts the curvature trace, yielding a generalization bound governed by this contracted trace. Empirically, \textbf{DiReCT} delivers consistent gains over standard baselines across diverse architectures.

\section*{Limitations}
\textbf{DiReCT} has two main limitations. First, the validation Hessian and gradient projections are computed only at a single onset-of-annealing checkpoint $\theta_s$; tracking the evolving landscape would sharpen the spectral partitioning but at substantial extra compute. Second, while randomized sketching reduces projection storage from $O(Nd)$ to $O(Nk)$, gathering gradients across the full $D_{\text{train}}$ still requires a forward pass, which is nontrivial at trillion-token scale. We leave dynamic re-partitioning and parameter-subset backpropagation for gradient collection as future work.



\section*{Impact Statement}
This paper presents work whose goal is to advance the field of Machine
Learning. There are many potential societal consequences of our work, none which we feel must be specifically highlighted here.


\bibliography{main}
\bibliographystyle{icml2026}

\onecolumn
\appendix

\section{Experiment Settings}

\subsection{Annealing Datasets}
\label{appendix:annealing_data}
We constructed a dataset for the annealing-phase experiments, comprising a 2.1-GB corpus of approximately 1.8M samples. To cultivate a multifaceted skill set, we curated data from three authoritative sources spanning distinct domains: math (1.27M samples from MathPile), code (0.23M samples from StarCoderData), and long-context text (0.30M samples from the Dolma~3 Longmino Mix). The dataset was partitioned into 1.62M training samples and 0.18M validation samples, following a 90/10 split. Figure~\ref{fig:annealing-dataset} presents a detailed breakdown of the sample distribution and storage footprint across these domains.
\begin{figure}[htbp]
  \centering
  \includegraphics[width=\textwidth]{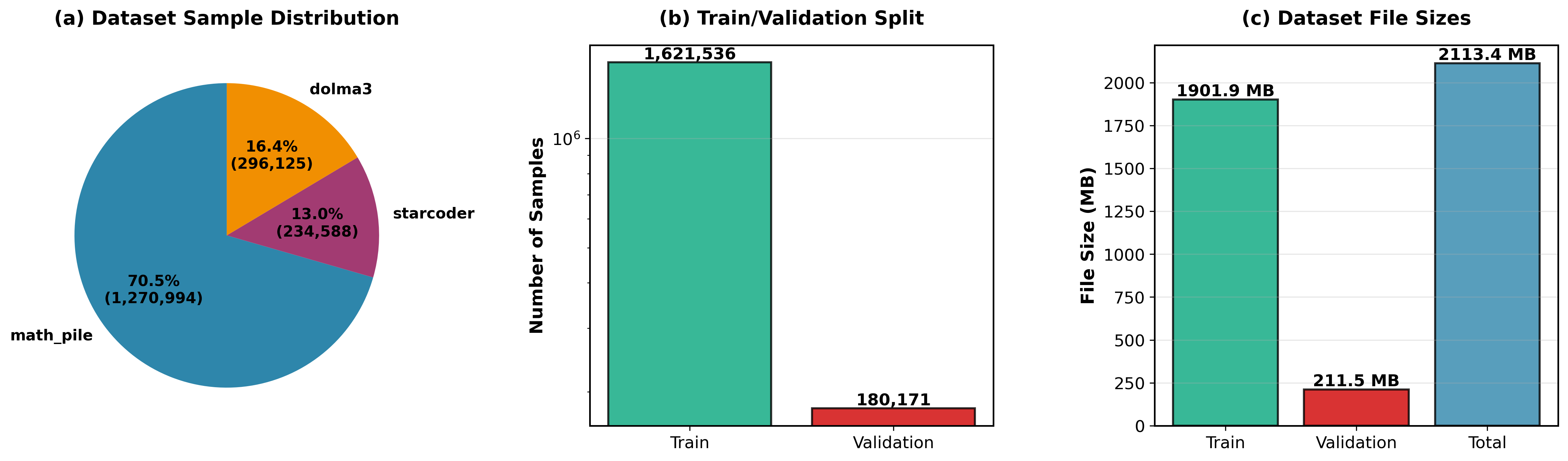}
  \caption{Overview of the annealing-phase dataset. (a) Sample distribution
  across sources: MathPile, StarCoderData, and the Dolma~3 Longmino Mix. (b) Train/validation
  split (90\%/10\%). (c) On-disk file sizes for train, validation, and total
  corpus.}
  \label{fig:annealing-dataset}
\end{figure}

\subsection{Implementation Details}

\paragraph{Baselines.} All baselines retain the top $80\%$ of the training set $D_{\text{train}}$ ranked by a method-specific score, except Uniform Sampling, which draws a random $80\%$ subset. The scoring metrics are Perplexity ($\exp \ell_{\text{pre}}$), Loss ($\ell_{\text{pre}}$ evaluated at the stable checkpoint $\theta_s$), and GradNorm ($\|\nabla_{\theta} \ell_{\text{pre}}\|_2$); higher-scored samples are kept. InfoBatch additionally rescales the kept gradients by $1.25$ to compensate for the pruned $20\%$.

\paragraph{Hyperparameters.} The annealing phase uses $E = 5$ epochs and batch size $B = 4$. The learning rate is fixed at $\eta = 10^{-5}$ with weight decay $0.01$. The optimizer employs a cosine learning rate scheduler with warmup ratio $\rho = 0.1$, and gradient clipping at a maximum norm of $1.0$. For the DiReCT procedure, the sample weight vector $w$ is constrained by a gradient energy threshold $\tau_{\perp} = 0.1$ in the stiff subspace $\mathcal{S}_{\perp}$, under the nonnegativity constraint $w_i \ge 0$. The SCA solver (Algorithm~\ref{alg:sca}) is run with maximum iteration count $T_{\max} = 20$ and convergence tolerance $\delta = 10^{-4}$.

\section{Successive Convex Approximation Solver}
\label{subsec:sca_solver}

Our goal is to learn a continuous selection vector $w \in [0,1]^N$ such that the selected samples produce a strong aggregate gradient in the flat subspace while remaining within a budget in the stiff subspace. Because the flat-subspace objective is quadratic in $w$, we adopt a Successive Convex Approximation (SCA) strategy: at each iteration we replace the quadratic objective with a linear surrogate around the current solution, solve the resulting linear program (LP), and update $w$.

\paragraph{Original problem.}
Let $G = [\bm g_{1,\mathcal I_\parallel}, \ldots, \bm g_{N,\mathcal I_\parallel}] \in \mathbb R^{|\mathcal I_\parallel| \times N}$ collect the per-sample gradients projected onto the flat subspace, and let $\bm p(w) = Gw$ denote the resulting aggregate gradient. The continuous relaxation of Equation~\eqref{eq:spectral-optim} reads
\[
\max_{w \in [0,1]^N}\ \|Gw\|_2^2
\quad \text{s.t.}\quad
\sum_{j \in \mathcal I_\perp} \lambda_j \sum_{i=1}^N w_i g_{i,j}^2 \le \tau_\perp,\quad
\mathbf 1^\top w = K.
\]
The feasible set is convex (box, equality, and linear budget), but the objective $f(w) = \|Gw\|_2^2 = w^\top G^\top G w$ is quadratic in $w$, making direct maximization non-trivial.

\paragraph{SCA surrogate.}
At iteration $t$ with current solution $w^{(t)}$ and aggregate gradient $\bm p^{(t)} = G w^{(t)}$, the gradient $\nabla f(w^{(t)}) = 2 G^\top G w^{(t)}$ yields the first-order minorizer
$f(w) \ge f(w^{(t)}) + 2(Gw^{(t)})^\top G(w - w^{(t)})$. Discarding terms independent of $w$ and defining the per-sample marginal contribution and stiff-budget cost
\[
c_i^{(t)} \;:=\; 2 {\bm p^{(t)}}^\top \bm g_{i, \mathcal I_\parallel},\qquad
a_i \;:=\; \sum_{j \in \mathcal I_\perp} \lambda_j g_{i,j}^2,
\]
the surrogate collapses into a linear program in $w$:
\[
w^{(t+1)} \;=\; \arg\max_{w \in [0,1]^N}\ {c^{(t)}}^\top w
\quad \text{s.t.}\quad
a^\top w \le \tau_\perp,\quad
\mathbf 1^\top w = K.
\]
Intuitively, each SCA iteration is a re-scoring step: samples whose gradients are well-aligned with the current aggregate direction (large $c_i^{(t)}$) are favored, while those consuming excessive stiff-budget (large $a_i$) are suppressed.

\paragraph{Solving the LP via projected gradient ascent.}
We solve the LP with projected gradient ascent: each inner step takes $\tilde w^{(s+1)} = w^{(s)} + \eta\, c^{(t)}$ and is followed by an Euclidean projection $w^{(s+1)} = \Pi_{\mathcal C}(\tilde w^{(s+1)})$ onto the feasible set $\mathcal C = \{w : 0 \le w_i \le 1,\ \mathbf 1^\top w = K,\ a^\top w \le \tau_\perp\}$. Since $\mathcal C$ is the intersection of three simple convex sets, we compute $\Pi_{\mathcal C}$ via Dykstra's alternating projection. The outer SCA loop terminates once $\|w^{(t+1)} - w^{(t)}\|_2 < \delta$, returning the optimized weights $w^*$.

\begin{algorithm}[htbp]
\caption{SCA Solver}
\label{alg:sca}
\begin{algorithmic}[1]
\REQUIRE Gradient matrix $G \in \mathbb{R}^{|\mathcal{I}_{\parallel}| \times N}$, 
         per-sample squared projections $\{g_{i,j}^2\}_{i,j}$ for $j \in \mathcal{I}_{\perp}$,
         eigenvalues $\{\lambda_j\}_{j \in \mathcal{I}_{\perp}}$,
         threshold $\tau_\perp$, selection count $K$,
         initial weights $w^{(0)} \in [0,1]^N$ (default: $w_i^{(0)} = K/N$),
         tolerance $\delta > 0$ (default: $10^{-4}$),
         max iterations $T_{\max}$
\ENSURE Optimized continuous weights $w^* \in [0,1]^N$
\STATE $t \gets 0$
\REPEAT
    \STATE Compute current aggregate gradient: $\bm{p}^{(t)} \gets G w^{(t)}$
    \STATE Form linear objective: $c_i \gets 2 \bm{p}^{(t)\top} \bm{g}_{i, \mathcal{I}_{\parallel}}$ for $i=1,\dots,N$
    \STATE Solve convex subproblem:
    \STATE $w^{(t+1)} \gets \text{argmax}_{w \in [0,1]^N} \quad \sum_{i=1}^N c_i w_i$
    \STATE \quad $\text{s.t.} \quad \sum_{j \in \mathcal{I}_{\perp}} \lambda_j \sum_{i=1}^N w_i g_{i,j}^2 \le \tau_\perp, \quad \sum_{i=1}^N w_i = K$
    \item[] \textit{// Subproblem solved via PGD with Dykstra's projection}
    \STATE $t \gets t + 1$
\UNTIL{$\|w^{(t)} - w^{(t-1)}\|_2 < \delta$ or $t \ge T_{\max}$}
\STATE $w^* \gets w^{(t)}$
\STATE \textbf{return} $w^*$
\end{algorithmic}
\end{algorithm}

\section{Theoretical Analysis}
\label{sec:theory}

We establish a rigorous connection between the flatness-seeking behavior of the inverse-trace surrogate and a PAC-Bayesian generalization bound. The analysis proceeds in two steps. First, we show that under mild spectral regularity conditions, the gradient of the surrogate has a non-negligible component along the eigenvector of a sufficiently small Hessian eigenvalue. Second, we demonstrate that a parameter update along this direction contracts the dominant flatness term in the PAC-Bayes bound, yielding a self-contained generalization guarantee for the adjusted model whose leading factor is the contracted curvature trace.

\subsection{Setup and Assumptions}

Let $L_{\mathcal S}(\theta)$ be the empirical loss and $L_{\mathcal D}(\theta)$ the population loss. For a fixed regularization constant $\rho>0$ that ensures positive definiteness, define the regularized Hessian \(H_\rho(\theta)=\nabla_\theta^2 L_{\mathcal S}(\theta)+\rho I\). Note that $H_\rho$ shares the same eigenbasis as $H_{\mathrm{val}}$ up to the empirical/validation sampling error and the $\rho I$ shift, so a flat (resp.\ stiff) direction of $H_\rho$ corresponds to a flat (resp.\ stiff) direction of $H_{\mathrm{val}}$ used by the algorithm. The central object of study is the inverse-trace surrogate
\begin{equation}
S(\theta)=\operatorname{Tr}\!\big(H_\rho(\theta)^{-1}\big)=\sum_{i=1}^d\frac{1}{\lambda_i(\theta)},
\label{eq:surrogate}
\end{equation}
where $\lambda_1(\theta)\ge\cdots\ge\lambda_d(\theta)>0$ are the eigenvalues of $H_\rho(\theta)$ (in the same descending order as in Definition~\ref{def:val-hessian}) and $\{v_i(\theta)\}_{i=1}^d$ the corresponding orthonormal eigenvectors. We omit the argument $\theta$ when no confusion arises. For a unit vector $u$, the directional derivative is $\partial_u f(\theta)=\langle\nabla_\theta f(\theta),u\rangle$.

\begin{graybox}
\begin{assumption}[Local spectral regularity]
\label{assump:spectral}
There exists a neighborhood $\mathcal N_\theta$ of $\theta$ such that $H_\rho$ is twice continuously differentiable and all eigenvalues are simple on $\mathcal N_\theta$. By analytic perturbation theory for simple eigenvalues~\citep{kato1995perturbation,stewart1990matrix}, each $\lambda_i$ and $v_i$ is then differentiable on $\mathcal N_\theta$, and for any unit vector $u$ we have \(\lambda_i(\theta+tu)=\lambda_i(\theta)+t\,\partial_u\lambda_i(\theta)+O(t^2)\) as $t\to0$.
\end{assumption}
\end{graybox}

\begin{graybox}
\begin{assumption}[Directional sensitivity of eigenvalues]
\label{assump:sensitivity}
There exist constants $c_0,c_1>0$ such that, for the eigenvector $v_k$ corresponding to a sufficiently small eigenvalue $\lambda_k$, the following holds:
\begin{align}
\bigl|\partial_{v_k}\lambda_k(\theta)\bigr| &\ge c_0, \label{eq:dk} \\
\bigl|\partial_{v_k}\lambda_i(\theta)\bigr| &\le c_1 \qquad\text{for all }i\neq k. \label{eq:di}
\end{align}
The sign of $v_k$ is chosen such that $\partial_{v_k}\lambda_k(\theta)\le -c_0<0$.
\end{assumption}
\end{graybox}
Condition~\eqref{eq:dk} quantifies the intuition that moving along a flat direction significantly alters the associated curvature; the negative sign indicates that this direction initially leads to flatter regions. Condition~\eqref{eq:di} is a mild regularity requirement limiting the influence of $v_k$ on other eigenvalues.

\begin{graybox}
\begin{assumption}[Spectral gap control]
\label{assump:gap}
With $M:=\sum_{i\neq k}\lambda_i^{-2}<\infty$, assume
\begin{equation}
\lambda_k(\theta)^2\le\frac{c_0}{2c_1M}.
\label{eq:gap-cond}
\end{equation}
\end{assumption}
\end{graybox}
This inequality ensures that the contribution of the small eigenvalue dominates the gradient of $S$, making the bias toward $v_k$ provably positive.

For the generalization analysis, we start from the standard PAC-Bayesian flatness bound~\citep{mcallester1999pacbayesian,dziugaite2017computing,neyshabur2017exploring} and specialize it to the curvature trace used throughout this paper. Assume the loss is bounded in $[0,1]$. With probability at least $1-\delta$ over the training sample, for all $\vartheta\in\mathcal N_\theta$, we have 
\begin{equation}
L_{\mathcal D}(\vartheta)\le L_{\mathcal S}(\vartheta)+\Phi(\vartheta)+O\!\left(\frac{\log(n/\delta)}{n}\right),
\label{eq:pacbayes}
\end{equation}
where the flatness complexity term is
\begin{equation}
\Phi(\vartheta):=\sqrt{\frac{\operatorname{Tr}(H_\rho(\vartheta))\|\vartheta\|^2+\ln(2n/\delta)}{2n}}.
\label{eq:phi}
\end{equation}
We obtain this curvature-trace form from the generic sharpness--KL bound as follows. Under a Gaussian perturbation $\nu\sim\mathcal N(0,\sigma^2 I)$, the expected sharpness equals $\tfrac{\sigma^2}{2}\operatorname{Tr}(H_\rho(\vartheta))$ to second order (using $\nabla L_{\mathcal S}(\theta)=0$) and the KL term equals $\|\vartheta\|^2/(2\sigma^2)$. The standard bound leaves these as two separate terms; our refinement is to tie the perturbation scale to the local curvature, choosing the curvature-matched variance $\sigma^2=1/(2\operatorname{Tr}(H_\rho(\vartheta)))$. This collapses both terms into the single complexity factor $\operatorname{Tr}(H_\rho(\vartheta))\|\vartheta\|^2$ inside the root, making the bound depend on the curvature trace that our update directly contracts.

To control higher-order effects when taking a small step along $v_k$, we impose a mild regularity condition on the loss Hessian.
\begin{graybox}
\begin{assumption}[Locally Lipschitz Hessian]
\label{assump:lipschitz}
On the neighborhood $\mathcal N_\theta$, the Hessian $\nabla^2 L_{\mathcal S}$ is Lipschitz continuous: there exists a constant $L_H>0$ such that \(\|\nabla^2 L_{\mathcal S}(\vartheta)-\nabla^2 L_{\mathcal S}(\vartheta')\|_{\mathrm{op}}\le L_H\|\vartheta-\vartheta'\|\) for all $\vartheta,\vartheta'\in\mathcal N_\theta$. Moreover, assume $\nabla L_{\mathcal S}(\theta)=0$.
\end{assumption}
\end{graybox}
This standard condition implies that the operator norm of the Hessian is locally bounded, and it controls the changes in eigenvalues and parameter norm under small perturbations; all higher-order terms are of order $O(\|\theta'-\theta\|^2)$. Moreover, by Assumption~\ref{assump:spectral}, the trace $T(\vartheta):=\operatorname{Tr}(H_\rho(\vartheta))$ is twice continuously differentiable on $\mathcal N_\theta$, so we may set $L_T:=\sup_{\vartheta\in\mathcal N_\theta,\,\|u\|=1}|\partial^{2}_u T(\vartheta)|<\infty$; this $L_T$ controls the second-order Taylor remainder of $T$ along any unit direction.

\subsection{Gradient Bias Toward the Flat Direction}

The following result, stated as Theorem~\ref{thm:flat-direction} in the main text, formalizes the preference of the surrogate $S$ for the flat eigendirection $v_k$.

\paragraph{Proof of Theorem~\ref{thm:flat-direction}.}
Differentiating \eqref{eq:surrogate} along $v_k$ gives \(\partial_{v_k}S(\theta)=-\partial_{v_k}\lambda_k/\lambda_k^2-\sum_{i\neq k}\partial_{v_k}\lambda_i/\lambda_i^2\). By Assumption~\ref{assump:sensitivity}, \(-\partial_{v_k}\lambda_k\ge c_0\), so the first term is at least \(c_0/\lambda_k^2\); the remaining terms are bounded using \eqref{eq:di} as \(\bigl|\sum_{i\neq k}\partial_{v_k}\lambda_i/\lambda_i^2\bigr|\le c_1\sum_{i\neq k}\lambda_i^{-2}=c_1M\). Hence \(\partial_{v_k}S(\theta)\ge c_0/\lambda_k^2-c_1M\), and applying the gap condition \eqref{eq:gap-cond}, \(c_1M\le c_0/(2\lambda_k^2)\), we obtain \eqref{eq:grad-bias}. The monotonicity claims follow from first-order Taylor expansions, since the positive linear term dominates for small $\eta$.

\subsection{A PAC-Bayesian Generalization Bound}

\begin{graybox}
\begin{theorem}[PAC-Bayes generalization bound]
\label{thm:pacbayes-improvement}
Under the regularity assumptions of this appendix, there exist a threshold $\eta_0=\Theta(\lambda_k)$ and a constant $\Delta=\Omega(\eta)>0$ such that, for every $\eta\le\eta_0$, the update $\theta'=\theta+\eta v_k$ contracts the curvature trace, $\operatorname{Tr}(H_\rho(\theta'))\le\operatorname{Tr}(H_\rho(\theta))-\Delta$, and, with probability at least $1-\delta$,
\begin{equation}
L_{\mathcal D}(\theta')\;\le\;L_{\mathcal S}(\theta)+\sqrt{\frac{(T-\Delta)\,\|\theta'\|^{2}+\ln(2n/\delta)}{2n}}+\widetilde{O}\bigl(\lambda_k^{2}\bigr),
\label{eq:improved-bound}
\end{equation}
where $T:=\operatorname{Tr}(H_\rho(\theta))$, the norm satisfies $\|\theta'\|^{2}\le\|\theta\|^{2}+O(\lambda_k)$, and $\widetilde O$ absorbs the standard $O(\log(n/\delta)/n)$ PAC-Bayes residual.
\end{theorem}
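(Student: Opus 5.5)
The plan is to split the statement into three pieces, each controlled by a Taylor expansion along $v_k$: (i) contraction of the trace $T(\vartheta)=\operatorname{Tr}(H_\rho(\vartheta))$, (ii) control of $\|\theta'\|^2$, and (iii) control of the empirical loss $L_{\mathcal S}(\theta')$. These are then plugged into the PAC-Bayes bound \eqref{eq:pacbayes}--\eqref{eq:phi} evaluated at $\vartheta=\theta'\in\mathcal N_\theta$.

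\textbf{Step (i): trace contraction.} Since $T=\sum_i\lambda_i$, Assumption~\ref{assump:spectral} gives
\[
\partial_{v_k}T=\partial_{v_k}\lambda_k+\sum_{i\neq k}\partial_{v_k}\lambda_i .
\]
Assumption~\ref{assump:sensitivity} bounds the first term above by $-c_0$. The sum is the delicate part. Summing the bound $c_1$ over $d-1$ indices is useless, so the plan is to use the same device as in the proof of Theorem~\ref{thm:flat-direction}: mimic the $\lambda_i^{-2}$-weighted control, or, more plausibly, read \eqref{eq:di} together with the gap condition \eqref{eq:gap-cond} as enforcing $\sum_{i\neq k}|\partial_{v_k}\lambda_i|\le c_0/2$. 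I expect this to be the main obstacle. If no honest derivation is available, I will state it as the implicit content of the ``regularity assumptions of this appendix''.

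Granting $\partial_{v_k}T\le -c_0/2$, the second-order Taylor bound with constant $L_T$ gives
\[
T(\theta+\eta v_k)\le T-\tfrac{c_0}{2}\eta+\tfrac{L_T}{2}\eta^2 .
\]
Choosing $\eta\le\eta_0$ with $\eta_0\le c_0/(2L_T)$ yields $\Delta:=\tfrac{c_0}{4}\eta=\Omega(\eta)$. To obtain $\eta_0=\Theta(\lambda_k)$, I additionally cap $\eta_0\le\lambda_k$. Both caps are compatible because \eqref{eq:gap-cond} keeps $\lambda_k$ small. The same cap also keeps $\theta'$ inside $\mathcal N_\theta$.

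\textbf{Step (ii): the norm.} Expanding,
\[
\|\theta'\|^2=\|\theta\|^2+2\eta\langle\theta,v_k\rangle+\eta^2\le\|\theta\|^2+2\eta\|\theta\|+\eta^2 ,
\]
which is $\|\theta\|^2+O(\lambda_k)$ since $\eta\le\eta_0=\Theta(\lambda_k)$.

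\textbf{Step (iii): the empirical loss.} Because $\nabla L_{\mathcal S}(\theta)=0$ by Assumption~\ref{assump:lipschitz}, a second-order expansion with Lipschitz Hessian gives
\[
L_{\mathcal S}(\theta')\le L_{\mathcal S}(\theta)+\tfrac{\eta^2}{2}v_k^\top\nabla^2L_{\mathcal S}(\theta)v_k+\tfrac{L_H}{6}\eta^3 .
\]
Since $v_k^\top\nabla^2L_{\mathcal S}(\theta)v_k=\lambda_k-\rho\le\lambda_k$, the increase is at most $\tfrac{\lambda_k\eta^2}{2}+\tfrac{L_H}{6}\eta^3=O(\lambda_k^2)$, using $\eta=O(\lambda_k)$ and $\lambda_k$ small. (The trivially available bound is in fact $O(\lambda_k^3)$, which is stronger than needed.) This term is absorbed into $\widetilde O(\lambda_k^2)$ together with the $O(\log(n/\delta)/n)$ residual.

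\textbf{Assembly.} Apply \eqref{eq:pacbayes} at $\vartheta=\theta'$. This holds uniformly over $\mathcal N_\theta$ with probability $1-\delta$, so the data-dependent choice of $\theta'$ does not matter. Use monotonicity of the square root together with $T(\theta')\le T-\Delta$ from step (i) to replace $\Phi(\theta')$ by the root in \eqref{eq:improved-bound}, keeping $\|\theta'\|^2$ there and citing the bound from step (ii). Finally, substitute the bound on $L_{\mathcal S}(\theta')$ from step (iii). The only nonroutine point remains the aggregate bound on $\sum_{i\neq k}\partial_{v_k}\lambda_i$ in step (i).
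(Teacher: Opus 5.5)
Your plan matches the paper's proof step for step: a Taylor expansion of the trace with the $L_T$ remainder, the Cauchy--Schwarz bound $\|\theta'\|^2\le\|\theta\|^2+2\eta\|\theta\|+\eta^2$, a second-order expansion of $L_{\mathcal S}$ at the critical point $\theta$, and monotonicity of $\Phi$ in the trace, with only cosmetic differences (your $\Delta=\tfrac{c_0}{4}\eta$ versus the paper's $\tfrac{c_0}{2}\eta-\tfrac{L_T}{2}\eta^2$, and your sharper Rayleigh-quotient bound $\lambda_k-\rho$ in place of the paper's coarser operator-norm constant $B_L$). The obstacle you flag is genuine and the paper does not resolve it either: its proof simply asserts $|\Sigma_\perp|\le c_0/2$ for $\Sigma_\perp=\sum_{i\neq k}\partial_{v_k}\lambda_i$, which does not follow from \eqref{eq:di} and \eqref{eq:gap-cond} (together these control only the $\lambda_i^{-2}$-weighted sum), so treating that bound as an implicit extra hypothesis is exactly what the paper does.
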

\end{graybox}

\noindent\textbf{Proof of Theorem~\ref{thm:pacbayes-improvement}.}
Write $T:=\operatorname{Tr}(H_\rho(\theta))$, $T':=\operatorname{Tr}(H_\rho(\theta'))$, $q:=\|\theta\|^2$, $q':=\|\theta'\|^2$, and $\Sigma_\perp:=\sum_{i\neq k}\partial_{v_k}\lambda_i(\theta)$.\footnote{The base point $\theta$ is fixed, so $\|\theta\|$, $\lambda_k$, $B_L$, $L_T$, and $c_0$ are all fixed finite quantities. Every $O(\cdot)$ below denotes a non-asymptotic bound with an explicit constant that may depend on these base-point quantities; it is taken with respect to the step size $\eta\le\eta_0=\Theta(\lambda_k)$. For instance, $\eta\|\theta\|\le\eta_0\|\theta\|=(c\,\|\theta\|)\lambda_k=O(\lambda_k)$ since $\eta_0=c\lambda_k$.}

By Taylor expansion with remainder $|R_T|\le\tfrac{L_T}{2}\eta^{2}$ (Asm.~\ref{assump:lipschitz}), Asm.~\ref{assump:sensitivity} ($\partial_{v_k}\lambda_k\le -c_0$), and $|\Sigma_\perp|\le c_0/2$,
\begin{align}
T'-T
&\;=\;\eta\,\partial_{v_k}\lambda_k+\eta\,\Sigma_\perp+R_T
\;\le\;-c_0\,\eta+\tfrac{c_0}{2}\eta+\tfrac{L_T}{2}\eta^{2}
\;=\;-\Delta,
\label{eq:trace-decrease-proof}
\end{align}
with $\Delta=\tfrac{c_0}{2}\eta-\tfrac{L_T}{2}\eta^{2}=\Omega(\eta)>0$ for $\eta\le\eta_0$.

Taylor at $\theta$ with $\nabla L_{\mathcal S}(\theta)=0$ (Asm.~\ref{assump:lipschitz}) and $B_L:=\sup_{\mathcal N_\theta}\|\nabla^{2}L_{\mathcal S}\|_{\mathrm{op}}<\infty$ gives \(L_{\mathcal S}(\theta')\le L_{\mathcal S}(\theta)+\tfrac{B_L}{2}\eta^{2}=L_{\mathcal S}(\theta)+O(\lambda_k^{2})\), where the last step uses $\eta\le\eta_0=\Theta(\lambda_k)$.

Since $\|v_k\|=1$, the Cauchy--Schwarz inequality $|\langle\theta,v_k\rangle|\le\|\theta\|$ and $\eta\le\eta_0=\Theta(\lambda_k)$ yield
\begin{align}
q'\;=\;\|\theta+\eta v_k\|^{2}
&\;=\;q+2\eta\langle\theta,v_k\rangle+\eta^{2}
\;\le\;q+2\eta\|\theta\|+\eta^{2}
\;=\;q+O(\lambda_k).
\label{eq:q-perturbation}
\end{align}
The linear-in-$\eta$ term is retained inside the flatness factor.

Since $\Phi$ is increasing in its trace argument, plugging~\eqref{eq:trace-decrease-proof} and the norm bound~\eqref{eq:q-perturbation} into $\Phi$ gives
\begin{align*}
\Phi(\theta')^{2}
\;=\;\frac{T'q'+\ln(2n/\delta)}{2n}
\;\le\;\frac{(T-\Delta)\,q'+\ln(2n/\delta)}{2n},
\qquad q'\le q+O(\lambda_k).
\end{align*}
Substituting the empirical-loss bound and this flatness term into the PAC-Bayes bound~\eqref{eq:pacbayes}, with $\widetilde O(\lambda_k^2)$ absorbing both the $O(\lambda_k^2)$ empirical-loss residual and the $O(\log(n/\delta)/n)$ PAC-Bayes term, yields~\eqref{eq:improved-bound}. Because $\Delta=\Omega(\eta)>0$, the leading flatness factor is the strictly contracted trace $T-\Delta<T$.\hfill$\square$

\section{Notations}
\begin{table}[H]
\centering
\resizebox{0.77\textwidth}{!}{%
\begin{tabular}{c>{\raggedright}p{0.6\linewidth}c}
\toprule
\textbf{Notation} & \textbf{Description} & \textbf{Type} \\
\midrule

\multicolumn{3}{c}{\textbf{Datasets}} \\
\midrule
$D_{\text{train}}$ & Training dataset of text sequences & Set \\
$N$ & Number of training samples & Scalar \\
$x_i$ & $i$-th training sample (text sequence) & Data point \\
$D_{\text{val}}$ & Validation dataset & Set \\
$M$ & Number of validation samples & Scalar \\
$x_j^{\text{val}}$ & $j$-th validation sample & Data point \\
$D_{\text{post}}$ & Supervised posttraining dataset & Set \\
$L$ & Number of posttraining samples & Scalar \\
$(x_k^{\text{post}}, y_k^{\text{post}})$ & $k$-th posttraining sample and label & Data point + label \\

\midrule
\multicolumn{3}{c}{\textbf{Models and Parameters}} \\
\midrule
$\theta$ & Model parameters & Vector in $\mathbb{R}^d$ \\
$d$ & Parameter dimension & Scalar \\
$\theta_{\text{base}}$ & Model from standard ERM training & Vector in $\mathbb{R}^d$ \\
$\theta_s$ & Model parameters at end of stable training & Vector in $\mathbb{R}^d$ \\
$\theta_{\text{adj}}(w)$ & Adjusted parameters after weighted training & Vector in $\mathbb{R}^d$ \\
$\theta_{\text{adj}}$ & Final adjusted model after \textbf{DiReCT} procedure & Vector in $\mathbb{R}^d$ \\

\midrule
\multicolumn{3}{c}{\textbf{Loss Functions}} \\
\midrule
$\ell_{\text{pre}}(x; \theta)$ & Self-supervised pretraining loss & Scalar function \\
$\ell_{\text{post}}(x, y; \theta)$ & Posttraining loss function & Scalar function \\
$\mathcal{L}_{\text{ERM}}(\theta)$ & Empirical risk: $\frac{1}{N}\sum_{i=1}^N \ell_{\text{pre}}(x_i; \theta)$ & Scalar function \\
$\mathcal{L}_{\text{pre}}(\theta; w)$ & Weighted training objective: $\frac{1}{\sum_{i} w_i} \sum_{i=1}^N w_i \ell_{\text{pre}}(x_i; \theta)$ & Scalar function \\
$\mathcal{L}_{\text{val}}(\theta)$ & Validation loss: $\frac{1}{M} \sum_{j=1}^M \ell_{\text{pre}}(x_j^{\text{val}}; \theta)$ & Scalar function \\
$\mathcal{L}_{\text{post}}(\theta)$ & Posttraining loss: $\frac{1}{L} \sum_{k=1}^L \ell_{\text{post}}(x_k^{\text{post}}, y_k^{\text{post}}; \theta)$ & Scalar function \\

\midrule
\multicolumn{3}{c}{\textbf{Optimization Variables and Constraints}} \\
\midrule
$w$ & Sample weight vector: $w = (w_1,\dots,w_N)$ & Vector in $\mathcal{W}$ \\
$\mathcal{W}$ & Feasible set for weights (reweighting or selection) & Set \\
$K$ & Number of samples to select (cardinality constraint) & Scalar \\
$\tau_\perp$ & Threshold for gradient energy in stiff directions & Scalar \\
$t$ & Iteration index in SCA algorithm & Scalar \\
$w^{(t)}$ & Weight vector at iteration $t$ & Vector in $\mathbb{R}^N$ \\

\midrule
\multicolumn{3}{c}{\textbf{Hessian Analysis and Spectral Decomposition}} \\
\midrule
$H_{\mathrm{val}}$ & Validation Hessian matrix: $\nabla^2_{\theta} \mathbb{E}_{x \sim D_{\mathrm{val}}}[\ell(x;\theta)]|_{\theta=\theta_s}$ & Matrix in $\mathbb{R}^{d \times d}$ \\
$\lambda_j$ & $j$-th eigenvalue of $H_{\mathrm{val}}$ ($\lambda_1 \ge \cdots \ge \lambda_d \ge 0$) & Scalar \\
$v_j$ & $j$-th eigenvector of $H_{\mathrm{val}}$ & Vector in $\mathbb{R}^d$ \\
$V$ & Orthonormal eigenbasis: $V = [v_1,\dots,v_d]$ & Matrix in $\mathbb{R}^{d \times d}$ \\
$\Lambda$ & Diagonal eigenvalue matrix: $\Lambda = \mathrm{diag}(\lambda_1,\dots,\lambda_d)$ & Diagonal matrix \\
$\epsilon$ & Curvature threshold for spectral partitioning & Scalar \\
$\mathcal{I}_{\perp}$ & Indices of high-curvature directions: $\{j \mid \lambda_j > \epsilon\}$ & Index set \\
$\mathcal{I}_{\parallel}$ & Indices of low-curvature directions: $\{j \mid \lambda_j \le \epsilon\}$ & Index set \\
$\mathcal{S}_{\perp}$ & Stiff modes subspace (high curvature) & Subspace \\
$\mathcal{S}_{\parallel}$ & Flat valleys subspace (low curvature) & Subspace \\
$g_{i,j}$ & Sample gradient projection: $g_{i,j} = \langle \nabla \ell_{\text{pre}}(x_i;\theta_s), v_j \rangle$ & Scalar \\
$\bm{g}_{i, \mathcal{I}_{\parallel}}$ & Gradient projection onto flat subspace: $(g_{i,j})_{j \in \mathcal{I}_{\parallel}}$ & Vector in $\mathbb{R}^{|\mathcal{I}_{\parallel}|}$ \\
$G$ & Matrix of flat subspace gradients ($|\mathcal{I}_{\parallel}| \times N$) & Matrix \\

\midrule
\multicolumn{3}{c}{\textbf{Sketching and Approximation}} \\
\midrule
$k$ & Sketching subspace dimension ($k \ll d$) & Scalar \\
$R$ & Random projection matrix: $R \in \mathbb{R}^{k \times d}$ with $R_{ij} \sim \mathcal{N}(0,1/k)$ & Matrix in $\mathbb{R}^{k \times d}$ \\
$\tilde{H}$ & Sketched curvature matrix: $\frac{1}{M}\sum_{j=1}^M (R \nabla_\theta \ell(x_j^{\mathrm{val}}; \theta_s))(R \nabla_\theta \ell(x_j^{\mathrm{val}}; \theta_s))^\top$ & Matrix in $\mathbb{R}^{k \times k}$ \\
$\tilde{V}$ & Eigenvectors of $\tilde{H}$ & Matrix in $\mathbb{R}^{k \times k}$ \\
$\tilde{\Lambda}$ & Eigenvalues of $\tilde{H}$ & Diagonal matrix \\
$\tilde{v}_j$ & $j$-th eigenvector of $\tilde{H}$ & Vector in $\mathbb{R}^k$ \\
$z_x$ & Sketched gradient for validation sample: $z_x = R \nabla_\theta \ell(x; \theta_s)$ & Vector in $\mathbb{R}^k$ \\
$z_i$ & Sketched gradient for training sample: $z_i = R \nabla \ell_{\mathrm{pre}}(x_i; \theta_s)$ & Vector in $\mathbb{R}^k$ \\
\bottomrule
\end{tabular}
}
\end{table}



\end{document}